\newtheorem{lemma}{Lemma}
\newtheorem{definition}{Definition}
\def\BibTeX{{\rm B\kern-.05em{\sc i\kern-.025em b}\kern-.08em
    T\kern-.1667em\lower.7ex\hbox{E}\kern-.125emX}}
\begin{document}

\title{A Transferable Pedestrian Motion Prediction Model for Intersections with Different Geometries}

\author{\IEEEauthorblockN{Nikita Jaipuria\textsuperscript{*}}
\IEEEauthorblockA{\textit{Dept. of Mechanical Engineering}\\
\textit{Massachusetts Institute of Technology}\\
Cambridge, USA \\
nikitaj@mit.edu}
\and
\IEEEauthorblockN{Golnaz Habibi\textsuperscript{*}}
\IEEEauthorblockA{\textit{Dept. of Aeronautics and Astronautics}\\
\textit{Massachusetts Institute of Technology}\\
Cambridge, USA \\
golnaz@mit.edu}
\and
\IEEEauthorblockN{Jonathan P. How}
\IEEEauthorblockA{\textit{Dept. of Aeronautics and Astronautics}\\
\textit{Massachusetts Institute of Technology}\\
Cambridge, USA \\
jhow@mit.edu}
\thanks{*These authors contributed equally} 
}

\maketitle

\begin{abstract}
This paper presents a novel framework for accurate pedestrian intent prediction at intersections. Given some prior knowledge of the curbside geometry, the presented framework can accurately predict pedestrian trajectories, even in new intersections that it has not been trained on. This is achieved by making use of the \emph{contravariant} components of trajectories in the \emph{curbside coordinate system}, which ensures that the transformation of trajectories across intersections is affine, regardless of the curbside geometry. Our method is based on the Augmented Semi Nonnegative Sparse Coding (ASNSC) formulation~\cite{chen2016augmented} and we use that as a baseline to show improvement in prediction performance on real pedestrian datasets collected at two intersections in Cambridge, with distinctly different curbside and crosswalk geometries. We demonstrate a 7.2\% improvement in prediction \emph{accuracy} in the case of same train and test intersections. Furthermore, we show a comparable prediction performance of TASNSC when trained and tested in different intersections with the baseline, trained and tested on the same intersection.
\end{abstract}

\begin{IEEEkeywords}
Pedestrian intent prediction, skewed coordinate system, Contravariant components, affine transformation, motion primitives, Gaussian Process, sparse coding
\end{IEEEkeywords}

\section{Introduction}
Increased safety of road travelers and a consequent reduction in road accident fatality rate has been the main driver of research on vehicle ADAS and self-driving cars. Recent advances in computation power and an increase in the amount of publicly available training datasets provided a boost to the application of state-of-the-art machine learning approaches in this field.

Safe and reliable operation of self-driving cars in busy, urban scenarios requires interaction with multiple moving agents like cars, cyclists and pedestrians. Intent recognition of pedestrians is more challenging than that of cars (and to some extent, cyclists) because of the absence of pedestrian ``rules of the road" like staying within road boundaries, following lanes etc. The problem is further complicated when the vehicle-pedestrian interaction occurs in intersection scenarios where additional context such as tightly packed sidewalks and traffic lights also influence pedestrian trajectory. Furthermore, intent modeling, in general, is data-intensive. Therefore, there exists a need for a general, transferable prediction algorithm, which when trained on one intersection, can be used for intent prediction in new, unseen intersections, with similar situational context but varying curbside and crosswalk geometries.

\begin{figure}[t]
	\begin{center}
		\includegraphics[width=1\textwidth]{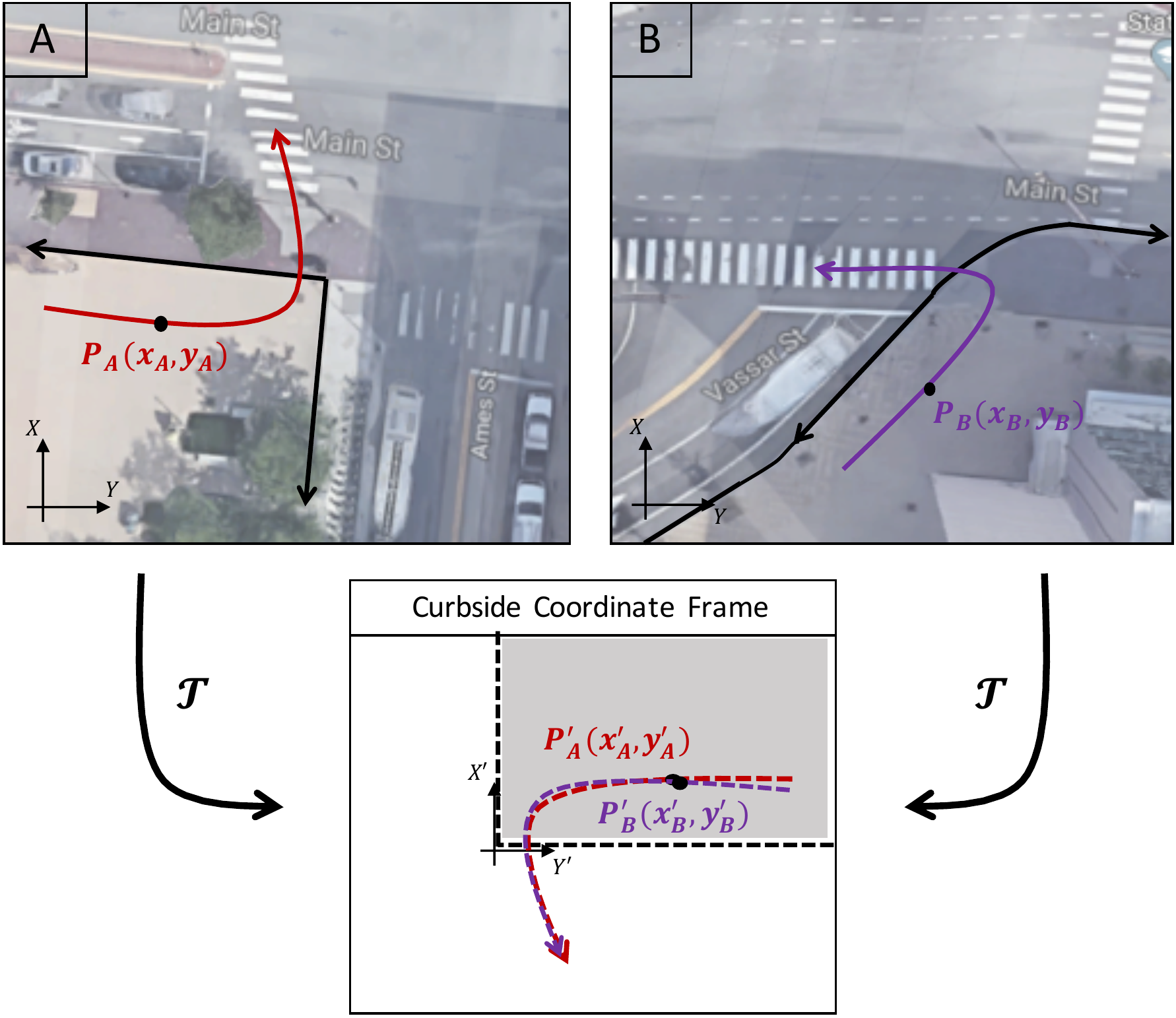} 
	\end{center}
	\caption{An illustration to show how points $P_A(x_A,y_A)$ on the red trajectory in intersection \textbf{A} and $P_B(x_B,y_B)$ on the purple trajectory in intersection \textbf{B}, under the transformation $\mathcal{T}$, map to points $P_A'(x_A',y_A')$ and $P_B'(x_B',y_B')$ in the curbside coordinate frame. We show that $\mathcal{T}$ is in general an affine transformation. Since pedestrian trajectories in urban intersections are significantly constrained by the curbsides, transforming them into the curbside coordinate frame using an affine transformation, intuitively would map trajectories with similar pedestrian intent approximately on top of each other in the curbside coordinate frame. This insight helps in developing a general, transferable pedestrian trajectory prediction model.
	\label{fig:intersection}}
\end{figure}

\cite{chen2016augmented} combine the merits of Markovian-based and clustering-based techniques to show significant improvement over state-of-the-art clustering methods for pedestrian intent estimation. However, their approach fails to incorporate context and is based on motion primitives learned using spatial features (x,y position in a local reference frame) specific to the training environment. Most of the previous work on context-based pedestrian intent recognition is limited to the identification of stopping versus crossing intent~\cite{schulz2015pedestrian, gonzalez2004context, schneemann2016context, kooij2014context, volz2016data, volz2015feature}, as opposed to long term trajectory prediction which is the aim of our approach. Furthermore, the use of spatial context features like orthogonal distance to curbside~\cite{volz2015feature, kooij2014context, volz2016data} makes these intent classification models directly dependent on the specific training intersection geometry and prevents generalization to new intersections with varying curbside and crosswalk geometries. \cite{bonnin2014pedestrian} developed a more generic, context-based, multi-model system for predicting crossing behavior in inner-city situations and zebra crossings. However, the output of their prediction model is again a crossing probability as opposed to predicted future trajectory.

\cite{coscia2018long} forecast long-term behavior of pedestrians by making use of past observed patterns and semantic segmentation of a bird's eye view of the scene. Such an approach, when applied in the real world, on board a self-driving vehicle, would require accurate high definition semantic priors/maps of each scene which are expensive to create and maintain. It is also unclear if their prediction model can be generalized across new, unseen scenes. \cite{ballan2016knowledge, sadeghian2017car} follow a similar approach to path prediction while also demonstrating the ability to ``transfer knowledge", and hence, predict in unseen locations with similar semantic elements. However, a prior bird's eye view of the scene is needed for both these approaches as well. Our approach, in contrast, is based on learning from real pedestrian trajectories collected by a vehicle equipped with a 3D Lidar and camera. In contrast to previous approaches, the presented approach requires a prior on the curbside geometry only (i.e. angle made by intersecting curbs at the corner point of interest) and can be generalized to any, unseen intersection with similar semantic cues as the one trained on. It should be emphasized, however, that if additional priors, in the form of high fidelity maps, are available, they can be easily incorporated in the presented approach.

The main contributions of this work are as follows:
\begin{enumerate}
	\item Introduction of a \textbf{novel representation of distance to curbside} as the contravariant components of trajectories in the curbside coordinate frame. This representation ensures that distance to curbside, as a context feature, is dependent on curbside geometry only (angle made by intersecting curbs).
	\item We show that the \textbf{transformation of trajectories from the original, local frame to curbside coordinate frame is affine}. It preserves properties such as collinearity, parallelism etc. across intersections while encoding situational context (see Fig.~\ref{fig:intersection}).
	\item \textbf{Transferable ASNSC (TASNSC), as a general, context-based pedestrian intent prediction model} for accurate prediction in new, unseen intersections with similar semantic cues as those that the model is trained on. 
\end{enumerate}

Our approach, TASNSC is based on the ASNSC framework. It encodes situational context and provides a general prediction model by learning motion primitives and their transition in the curbside coordinate frame. TASNSC achieves 7.2\% improvement in prediction \emph{accuracy} over ASNSC when trained and tested on the same intersection. When trained and tested on different intersections, TASNSC shows a comparable prediction performance with the baseline ASNSC trained and tested on the same intersection.



\section{Preliminaries}
In this section, we first briefly review the trajectory prediction approach of~\cite{chen2016augmented} which comprises of the ASNSC algorithm for learning motion primitives and a Gaussian Process (GP) based framework for future motion prediction using the learned dictionary of motion primitives. This is followed by a review of covariant versus contravariant components of a vector in a general (i.e. including both orthogonal and skewed) two-dimensional coordinate system.

\subsection{Augmented Semi-Nonnegative Sparse Coding (ASNSC)}
Given a training dataset of $n$ trajectories, where each trajectory $t_i$ is a sequence of two-dimensional position measurements taken at a fixed time interval $\Delta t$, ASNSC learns a set of $K$ dictionary atoms, $\mathbf{D} = [\mathbf{d}_1, \ldots, \mathbf{d}_K]$, in a discretized world, where each $\mathbf{d}_k$ represents a motion primitive (see Fig.~\ref{fig:dictionary1}). 

\subsection{Trajectory prediction using the learned dictionary}
As shown in~\ref{fig:dictionary2}, $\mathbf{D}$ is used to segment the original training trajectories into clusters, where each cluster is best explained by one of the learned dictionary atoms. A transition matrix, $\mathbf{T} \in \mathbb{Z}^{K\times K}$ is thus created, where $\mathbf{T}(i,j)$ denotes the number of trajectories exhibiting a transition from $\mathbf{d}_i$ to $\mathbf{d}_j$. A transition is, therefore, mathematically represented as a concatenation of two dictionary atoms $\{\mathbf{d}_i, \mathbf{d}_j|\mathbf{T}(i,j)>0\}$. Each transition is modeled as a two-dimensional GP flow field~\cite{joseph2011bayesian,aoude2013probabilistically}. In particular, two independent GPs, $(GP_x,GP_y)$, called GP motion patterns, are used to learn a mapping from the two-dimensional position features to the $x$ and $y$ velocities respectively.

\begin{figure}[h]
\centering
\subfigure[]{\label{fig:dictionary1}\includegraphics[width=0.3\textwidth]{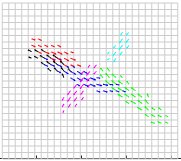}}
\qquad
\subfigure[]{\label{fig:dictionary2}\includegraphics[width=0.3\textwidth]{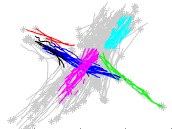}}
\caption{(a) Each color represents a single dictionary atom $\mathbf{d}_k$ i.e. motion primitive; (b) Segmentation of training trajectories (in gray) into clusters, where each cluster is best explained by the dictionary atom of the same color in (a).}
\label{fig:dictionary}  	
\end{figure}

\begin{figure}[h]
\centering
\subfigure[]{\label{fig:rectangular}\includegraphics[width=0.3\textwidth]{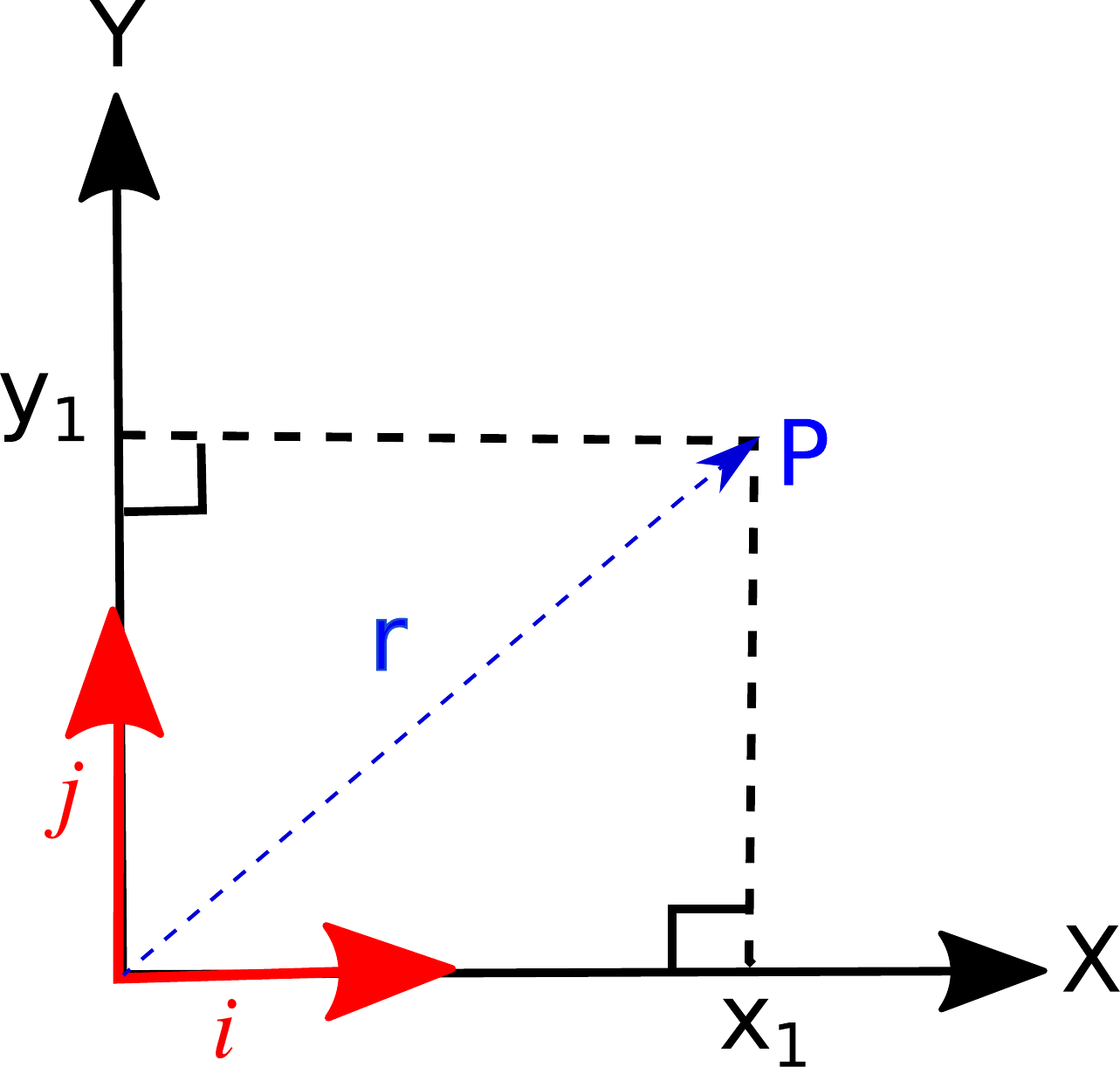}}
\subfigure[]{\label{fig:oblique}\includegraphics[width=0.32\textwidth]{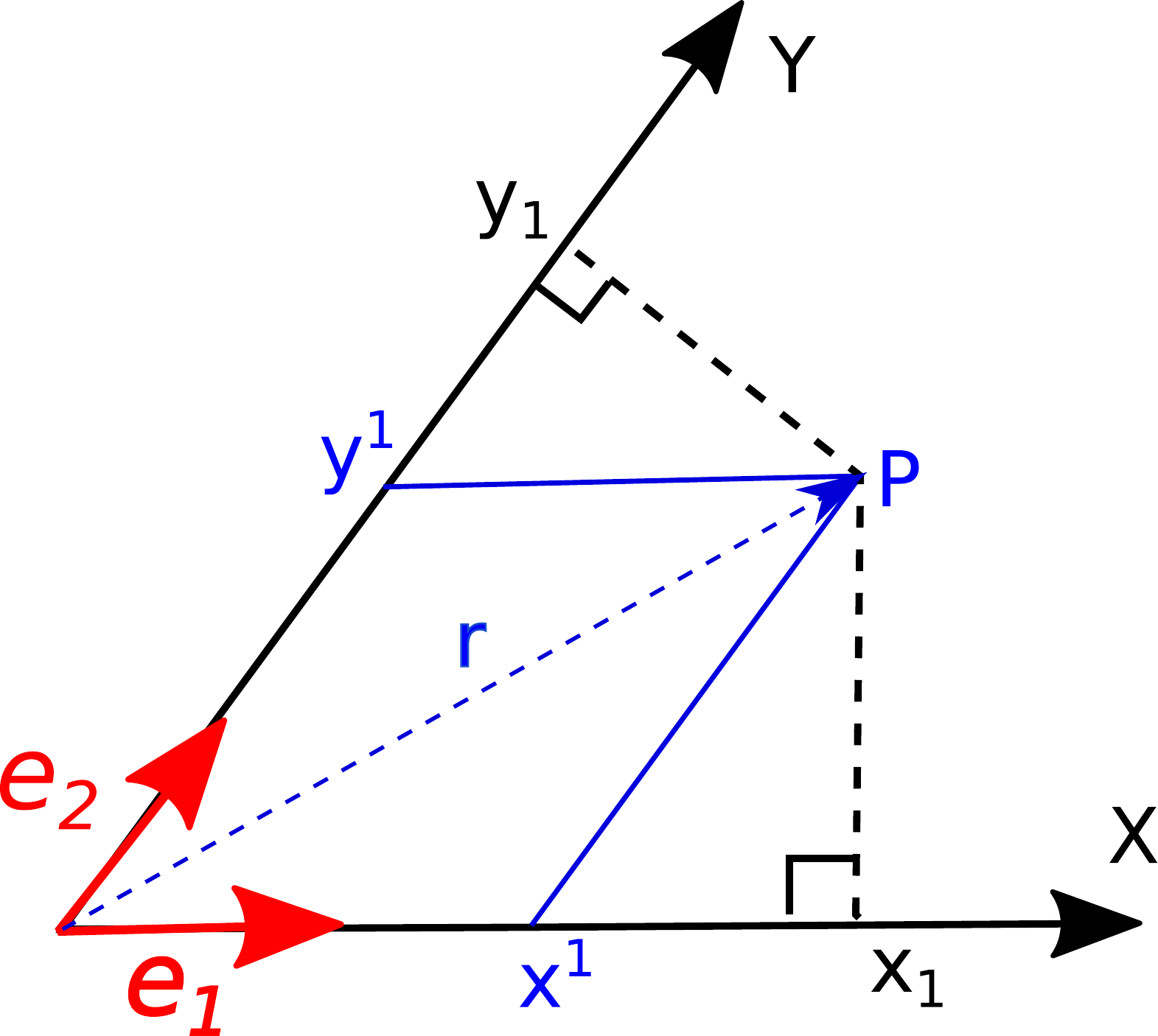}}
\subfigure[]{\label{fig:obliquecompute}\includegraphics[width=0.32\textwidth]{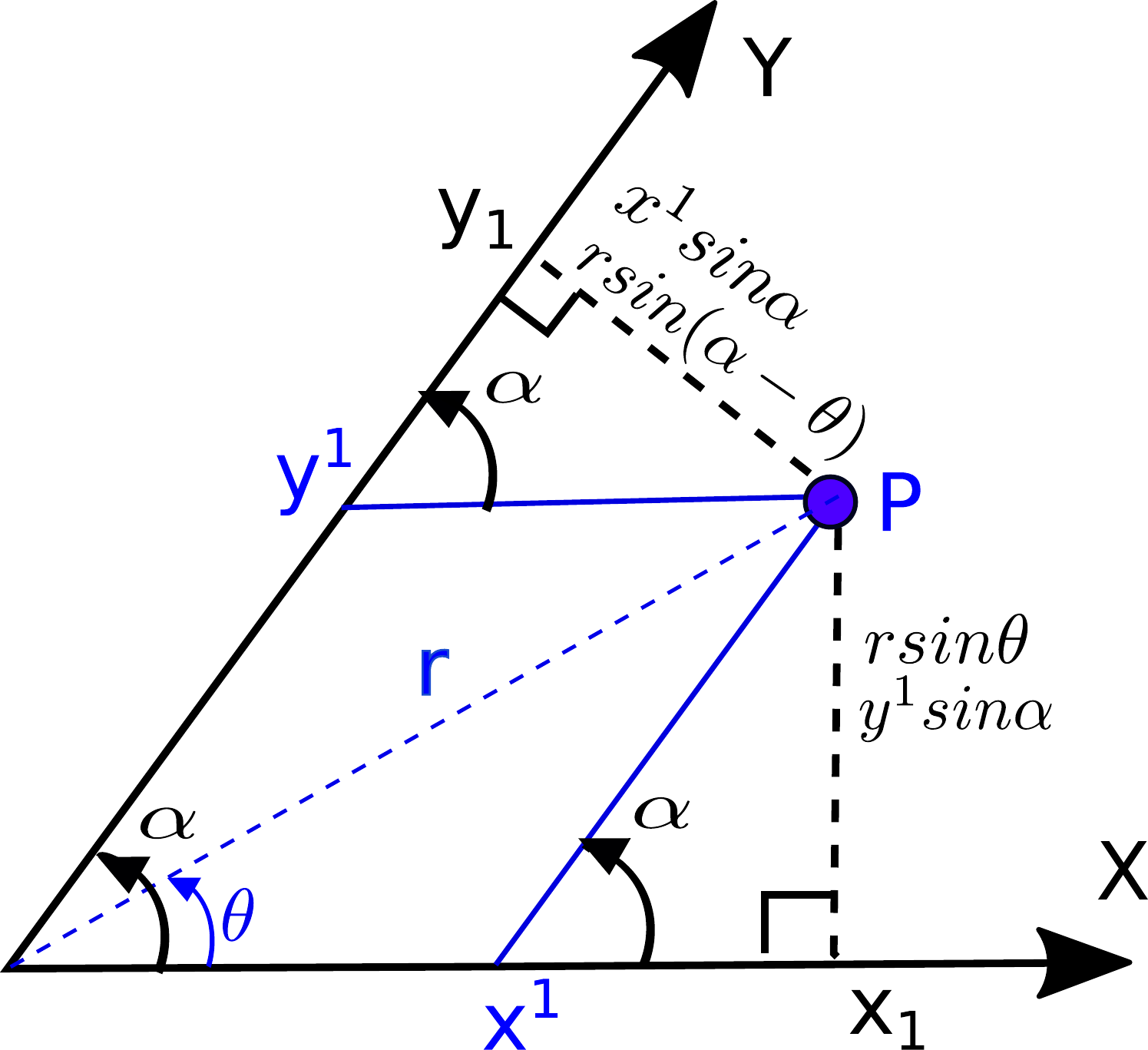}}
\caption{(a) Orthogonal coordinate system; (b) Skewed coordinate system; (c) Calculation of contravariant components in a skewed coordinate system using trigonometry}
\label{fig:coordinates}  	
\end{figure}

\subsection{Skewed coordinate systems \& covariant versus contravariant components of two-dimensional vectors}
As shown in Fig.~\ref{fig:rectangular} and Fig.~\ref{fig:oblique}, a coordinate system can be either orthogonal (represented by unit vectors $\vec{i},\vec{j}$) or skewed (represented by unit vectors $\vec{e_1},\vec{e_2}$). Covariant and contravariant components of a position vector in an orthogonal coordinate system are the same. A position vector in such a system, therefore, has only one representation i.e. $\vec{r} =  x_{1}\vec{i}+y_{1}\vec{j}$ (see Fig.~\ref{fig:rectangular}). However, in a skewed coordinate system, the covariant components ($x_1,y_1$) and contravariant components ($x^1,y^1$) of a position vector do not align. The same position vector, in such a system, can be represented using both its covariant and contravariant components. Using the contravariant components yields $\vec{r} =  x^{1}\vec{e_{1}} +y^{1}\vec{e_{2}}$ (see Fig.~\ref{fig:oblique}). Since $(\vec{e_{1}} \cdot \vec{e_{2}}) \neq 0$ in a skewed coordinate system, $r^2 \neq (x^{1})^2 +(y^{1})^2$ in general. As shown in Fig.~\ref{fig:obliquecompute}, basic trigonometric identities can be used for computing the contravariant components of a position vector in a skewed coordinate system.
\begin{align}
\label{eq:contravarianteq1}
x^{1} = r\sin{(\alpha-\theta)}/\sin{\alpha}\\\label{eq:contravarianteq2}
y^{1} = r\sin{\theta}/\sin{\alpha}
\end{align}

Since our aim is pedestrian intent prediction in urban intersections, where curbside geometry significantly constraints pedestrian motion, learning motion primitives and their transition in the curbside coordinate frame $X'Y'$, as shown in Fig.~\ref{fig:intersection} (instead of an arbitrarily placed local coordinate frame $XY$, as in~\cite{chen2016augmented}), can help improve prediction accuracies because of the addition of context. Furthermore, in the following section, we show that pedestrian trajectories, when represented using contravariant components in the curbside coordinate frame, undergo an affine transformation across intersections with varying curbside geometries. This aids us in developing a context-aware prediction model that can be generalized to any intersections.
\section{Algorithm}
As discussed earlier, designing a general, transferable prediction model needs features that are independent of the specific training intersection geometry. In this section, we show that any point on a pedestrian trajectory, when mapped from the original, arbitrarily placed, local coordinate frame to the curbside coordinate frame using its contravariant components, undergoes an affine transformation. The choice of the curbside coordinate frame as the frame in which trajectories are mapped can be justified by the fact that pedestrian trajectories are significantly constrained by curbsides in intersection scenarios. Since an affine transformation preserves properties like collinearity, ratios of distances, parallelism etc., the situational context of pedestrian trajectories i.e. shape and relative distance with respect to curbside is preserved under this transformation (see Fig.~\ref{fig:curbframe} and Fig.~\ref{fig:transfer}). 

\begin{definition}
Let us define a coordinate frame with its origin at the intersection corner of interest, and its axes along the two curbsides intersecting at the chosen corner as the ``curbside coordinate frame" (see Fig.~\ref{fig:intersection}).
\end{definition}

\begin{definition}
Given a point $P(x,y)$ in the original, arbitrarily placed local coordinate frame of an intersection (i.e. XY frame in intersections \textbf{A} and \textbf{B} in Fig.~\ref{fig:intersection}), let us define a transformation $\mathcal{T}: P \rightarrow P'$ s.t. $P'(x',y')$ is in the curbside coordinate frame of the same intersection, where $x',y'$ are the contravariant components of $P'$ in the curbside coordinate frame.
\end{definition}

\begin{lemma}
  $\mathcal{T}$ is an affine transformation
\end{lemma}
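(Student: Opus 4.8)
The plan is to unpack the definition of an affine transformation --- a map of the form $\vec{P}\mapsto \mathbf{A}\vec{P}+\mathbf{b}$ for a fixed linear map $\mathbf{A}$ and fixed translation $\mathbf{b}$ --- and exhibit $\mathcal{T}$ explicitly in this form. First I would decompose $\mathcal{T}$ into two elementary steps: (i) a \emph{translation} shifting the origin of the local $XY$ frame to the intersection corner $\vec{c}=(c_x,c_y)$, producing the relative position vector $\vec{r}=(x-c_x,\,y-c_y)$; and (ii) the extraction of the \emph{contravariant components} of $\vec{r}$ along the two curbside directions. Step (i) is affine by inspection, so the entire burden of the proof rests on showing that step (ii) is linear.

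The key step is to recognize that computing contravariant components is nothing more than solving a linear system. Let $\vec{e_1},\vec{e_2}$ be the unit vectors along the two curbsides, meeting at angle $\alpha$. By definition of the contravariant components, $(x',y')$ is the unique pair satisfying
\begin{equation}
\vec{r}=x'\vec{e_1}+y'\vec{e_2}.
\end{equation}
Collecting the basis vectors as columns of the $2\times 2$ matrix $\mathbf{E}=[\,\vec{e_1}\ \vec{e_2}\,]$, this reads $\vec{r}=\mathbf{E}\,(x',y')^{\top}$, whence
\begin{equation}
(x',y')^{\top}=\mathbf{E}^{-1}\vec{r}.
\end{equation}
The inverse exists precisely because the two curbsides are not collinear, i.e. $\det\mathbf{E}=\sin\alpha\neq 0$; this is where the geometric assumption $0<\alpha<\pi$ enters. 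Thus step (ii) is the fixed linear map $\mathbf{E}^{-1}$.

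Composing the two steps gives $(x',y')^{\top}=\mathbf{E}^{-1}\big(\vec{P}-\vec{c}\big)=\mathbf{E}^{-1}\vec{P}-\mathbf{E}^{-1}\vec{c}$, which is exactly $\mathbf{A}\vec{P}+\mathbf{b}$ with $\mathbf{A}=\mathbf{E}^{-1}$ and $\mathbf{b}=-\mathbf{E}^{-1}\vec{c}$; hence $\mathcal{T}$ is affine, and indeed a bijective affine map since $\mathbf{A}$ is invertible. As a consistency check I would substitute $\vec{e_1}=(\cos\phi_1,\sin\phi_1)$ and $\vec{e_2}=(\cos\phi_2,\sin\phi_2)$ with $\alpha=\phi_2-\phi_1$, and write $\vec{r}=r(\cos\psi,\sin\psi)$; evaluating $\mathbf{E}^{-1}\vec{r}$ should then recover $x'=r\sin(\alpha-\theta)/\sin\alpha$ and $y'=r\sin\theta/\sin\alpha$ with $\theta=\psi-\phi_1$, matching Eqs.~\eqref{eq:contravarianteq1}--\eqref{eq:contravarianteq2}.

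The main obstacle is conceptual rather than computational: the trigonometric formulas \eqref{eq:contravarianteq1}--\eqref{eq:contravarianteq2} look nonlinear, since they involve $\sin$ of the point-dependent angle $\theta$ and the point-dependent magnitude $r$. The crux is to see past this and recognize that the particular combinations $r\sin(\alpha-\theta)$ and $r\sin\theta$ are in fact linear functionals of the Cartesian coordinates of $\vec{r}$ --- a fact made transparent by the matrix identity $(x',y')^{\top}=\mathbf{E}^{-1}\vec{r}$. Once contravariant extraction is identified with the single fixed linear map $\mathbf{E}^{-1}$, affineness of $\mathcal{T}$ follows immediately.
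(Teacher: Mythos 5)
Your proof is correct, and its key step takes a genuinely different route from the paper's. Both arguments share the same outer skeleton --- factor $\mathcal{T}$ into elementary affine pieces and invoke closure of affine maps under composition --- but they cut the factorization differently and, more importantly, establish linearity of the contravariant-extraction step differently. The paper introduces an intermediate orthogonal helper frame $H$ (origin at the corner, x-axis along one curb), so that its first factor $T_{OH}$ is a rotation plus translation, writes the point as $(x^*,y^*)=(r\cos\theta,\,r\sin\theta)$, and applies angle-subtraction identities to the trigonometric formulas (\ref{eq:contravarianteq1})--(\ref{eq:contravarianteq2}) to arrive at the explicit matrix $T_{HC}=\left(\begin{smallmatrix}1 & -1/\tan\alpha\\ 0 & 1/\sin\alpha\end{smallmatrix}\right)$. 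You instead absorb the rotation into the basis matrix and go straight to the definition of contravariant components as expansion coefficients, $\vec{r}=x'\vec{e_1}+y'\vec{e_2}$, so that extracting them is solving a fixed linear system, $(x',y')^{\top}=\mathbf{E}^{-1}\vec{r}$ with $\mathbf{E}=[\,\vec{e_1}\ \vec{e_2}\,]$; linearity is then immediate, and the point-dependent quantities $r$ and $\theta$ never enter the argument. The two computations agree: expressing $\vec{e_1},\vec{e_2}$ in the paper's helper frame gives $\mathbf{E}=\left(\begin{smallmatrix}1 & \cos\alpha\\ 0 & \sin\alpha\end{smallmatrix}\right)$, whose inverse is exactly $T_{HC}$. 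Your route buys several things the paper leaves implicit: the nondegeneracy condition $\det\mathbf{E}=\sin\alpha\neq 0$, and hence that $\mathcal{T}$ is a \emph{bijective} affine map --- something Algorithm~\ref{alg:tasnsc} silently relies on when it applies $\mathcal{T}^{-1}$ in line 11 --- plus an argument that works verbatim for any nondegenerate basis in any dimension, with no planar trigonometry. What the paper's route buys is the closed-form matrix in terms of the single geometric parameter $\alpha$, which is the quantity Algorithm~\ref{alg:obliquetf} actually implements, and the transparent special case $\alpha=\pi/2$ reducing $T_{HC}$ to the identity; in your write-up that explicit form emerges only from the consistency check you sketch at the end rather than from the main argument.
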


\begin{proof}
Given the original, orthogonal, local coordinate system $O$ and an intermediate, helper coordinate system $H$ (also orthogonal but with its origin at the intersection corner and its x-axis parallel to the x-axis of the curbside coordinate frame $C$), if $T_{OH}$ and $T_{HC}$ represent the coordinate transformation from $O$ to $H$ and $H$ to $C$ respectively, then $\mathcal{T} = T_{OH}T_{HC}$.
\begin{equation} \label{eq:taoToSubs}
\implies
\begin{pmatrix} x' \\ y' \end{pmatrix} 
  =  \mathcal{T} 
 \begin{pmatrix}
 x \\ y \end{pmatrix} 
 =T_{OH}T_{HC}\begin{pmatrix}
 x \\ y \end{pmatrix}
\end{equation}
Since, $T_{OH}$ is simply a combination of rotation and translation, it is an affine transformation. Let us now assume that the original point $P(x,y)$ in $O$ maps to $P^*(x^*,y^*)$ in $H$, such that ${(x^*)}^2+{(y^*)}^2 = r^2$. Note that, by definition, the origin and x-axis of $H$ overlap with the origin and x-axis of $C$. From Fig.~\ref{fig:obliquecompute}, if $\theta$ is the angle made by the position vector with the x-axes,
\begin{equation} \label{eq:affine}
x^*= rcos\theta, y^*= rsin\theta
\end{equation}
Therefore, from (\ref{eq:contravarianteq1}), (\ref{eq:contravarianteq2}) and (\ref{eq:affine}), if $\alpha$ is the angle between the intersecting curbsides, $P'(x',y')$ can be written as
 \begin{align}
 \label{eq:cont1}
x' =(r\cos{\theta} \sin{\alpha} - r\sin{\theta}\cos{\alpha})/\sin{\alpha}\\
 \label{eq:cont2}
\implies x' = x^* - y^*/\tan{\alpha} \\
 \label{eq:cont3}
y' = r\sin{\theta}/\sin{\alpha} = y^*/sin\alpha
 \end{align}
Note that (\ref{eq:cont2}), (\ref{eq:cont3}) can be combined and written in matrix form as
\begin{equation}
 \label{eq:conttfmatrix}
\begin{pmatrix} x' \\ y' \end{pmatrix} 
=T_{HC}
\begin{pmatrix} x^* \\ y^* \end{pmatrix} =
\begin{pmatrix}
1 & -1/tan\alpha \\0 & 1/sin\alpha \end{pmatrix} 
\begin{pmatrix}
 x^* \\ y^* \end{pmatrix} 
\end{equation}
For intersections with orthogonal curbsides and therefore an orthogonal curbside coordinate frame $C$, $\alpha = \pi/2$ and $T_{HC}$ is the identity matrix. Since, $T_{HC}$ linearly maps $(x^*,y^*)$ to $(x',y')$, it is an affine transformation. Furthermore, since $T_{OH}$ and $T_{HC}$ are both affine transformations, $\mathcal{T}$ is also an affine transformation by (\ref{eq:taoToSubs}).
\end{proof}

Since $\mathcal{T}$ is affine, all general properties of an affine transform hold under $\mathcal{T}$, i.e.
\begin{itemize}
	\item Collinearity is preserved 
 	\item Parallel lines remain parallel
 	\item Convexity of sets is preserved 
	\item Ratios of distances are preserved i.e. the midpoint of a line segment remains the midpoint of the transformed line segment
\end{itemize}	 
As discussed earlier, since the objective of this paper is pedestrian intent estimation in urban intersections, which is highly constrained by curbside geometry, transforming pedestrian trajectories into the curbside coordinate frame helps in representing trajectories in different intersection geometries in a general frame. This aids in building a context-aware, general prediction model.

Algorithm~\ref{alg:tasnsc} describes TASNSC as a transferable version of the ASNSC algorithm. We show that TASNSC accurately predicts trajectories in unseen intersections with similar semantics as those that it learned on. Given the curbside coordinate vectors $(\vec{e_{1}},\vec{e_{2}})$ in the training intersection, $\mathcal{T}$ is used to map the training trajectories from the local, arbitrary placed coordinate frame to the curbside coordinate frame using contravariant components. Motion primitives are then learned in the curbside coordinate frame using ASNSC (line 7). For trajectory prediction in an unseen intersection, first the observed trajectory is transformed into the curbside coordinate frame of the test intersection using $\mathcal{T}$ (line 9). Motion primitives and their transition learned in the curbside coordinate frame of the training intersection are then used for prediction, followed by a transformation of the predicted trajectory into the original, local coordinate frame of the test intersection (line 11). Algorithm~\ref{alg:obliquetf} describes the procedure for transformation of pedestrian trajectories under $\mathcal{T}$. Fig.~\ref{fig:curbframe} and Fig.~\ref{fig:transfer} show the transformation of trajectories into the curbside coordinate frame under $\mathcal{T}$ for an orthogonal and skewed coordinate system respectively.

\begin{figure}	     			     					
\includegraphics[width=\columnwidth]{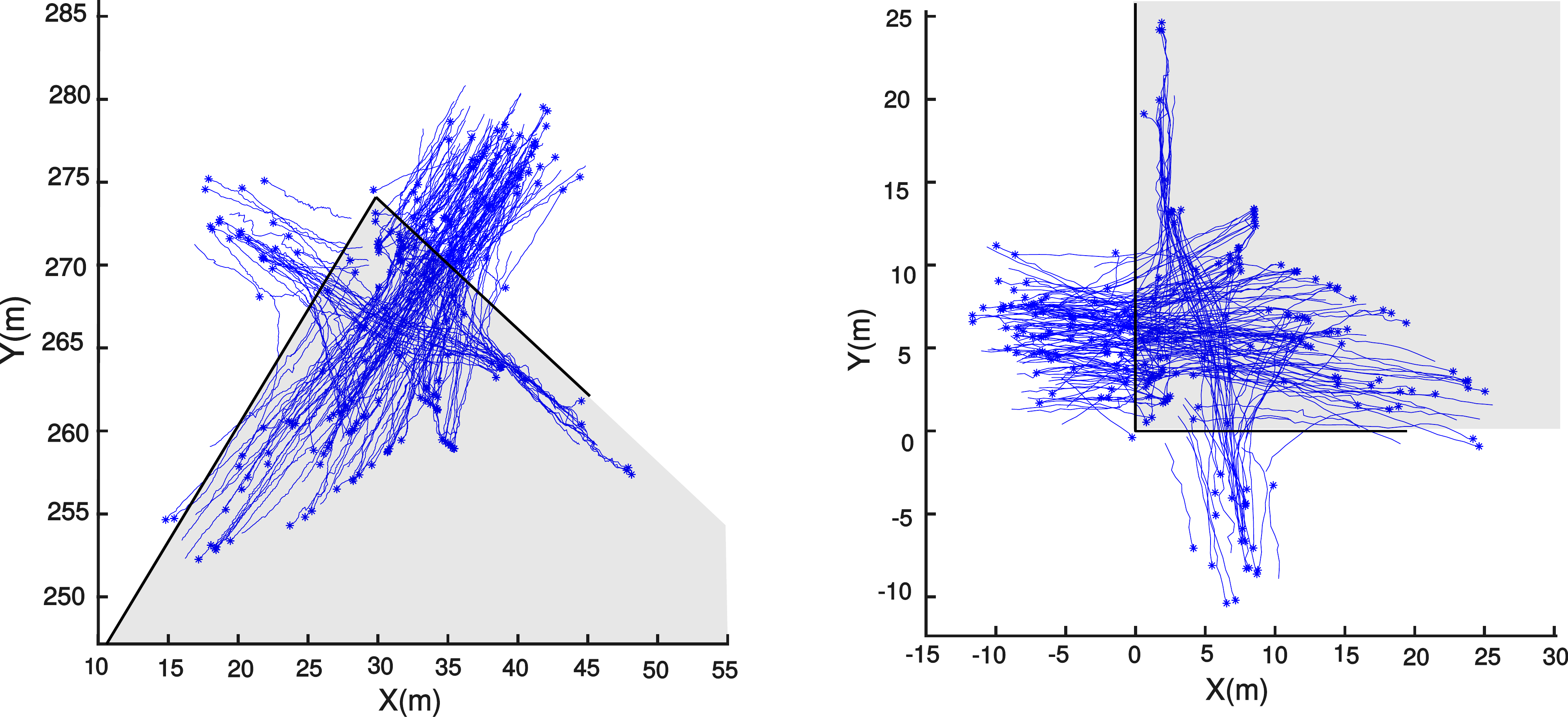}	
\caption{\label{fig:curbframe}
Original (left) and transformed trajectories in the curbside coordinate frame (right) under the transformation $\mathcal{T}$, when the curbs are orthogonal to each other. Trajectories are shown in blue and shaded gray area denotes the sidewalk.}     	
\end{figure}	
 		
\begin{figure}		     								
\includegraphics[width=\columnwidth]{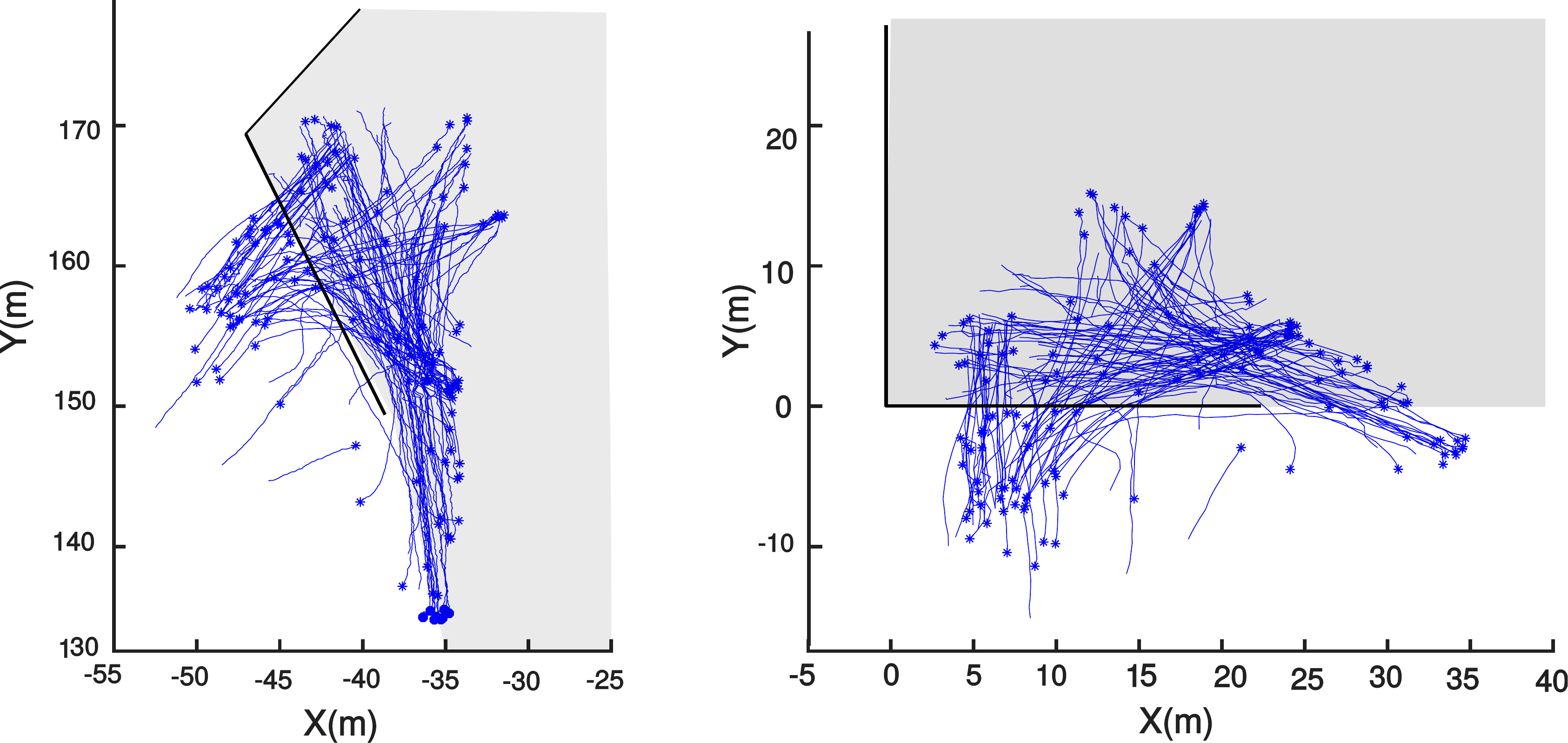}	
\caption{\label{fig:transfer}
Original (left) and transformed trajectories in the curbside coordinate frame (right) under the transformation $\mathcal{T}$, when the curbs are skewed. Trajectories are shown in blue and shaded gray area denotes the sidewalk.}
\end{figure}

\begin{algorithm}
\caption{Transferable ASNSC (TASNSC)}\label{alg:tasnsc}
\begin{algorithmic}[1]
	\State \textbf{Input:} $(\vec{e_{1}},\vec{e_{2}}), D_{tr}$ \Comment{\small{$D_{tr}$ is the training set of trajectories}}
	\State \textbf{Training Phase:}
	\ForAll{$t_{i} \in D_{tr}$}
		\State  $t'_{i} = \mathcal{T}(\vec{e_{1}},\vec{e_{2}},t_{i})$
		\State  $D' \gets \{t'_{i}\}$ \Comment{\small{$D'$ is transformed training dataset}}
	\EndFor 
	\State $\mathbf{D} = ASNSC(D')$ \Comment{\small{$\mathbf{D}$ is set of learned dictionary atoms}}
	\State \textbf{Testing Phase:}
	\State  $t'_{o}= \mathcal{T}(\vec{e'_{1}},\vec{e'_{2}},t_{o})$ \Comment{\small{$(e'_{1},e'_{2})$ are curbside unit vectors in test intersection, $t_o$ is observed trajectory}}
	\State $t'_{p} = predict(d, t'_o)$ 
	\State $t_{p} = \mathcal{T}^{-1}(t'_{p})$ 
	\State \textbf{return} $t_{p} = (x_{1},y_{1})$ \Comment{predicted trajectory}%
\end{algorithmic}
\end{algorithm}

\begin{algorithm}
\caption{Transformation $\mathcal{T}$}\label{alg:obliquetf}
\begin{algorithmic}[1]
	\State \textbf{Input:}$(\vec{e_{1}},\vec{e_{2}}, t_{i})$ \Comment{\small{curbside unit vectors, trajectory}}
	\State $\alpha \gets  cos^{-1}(e_{1}.e_{2})$
	\ForAll{$P_{j}(x_j,y_j) \in  t_i$}
		\State ${x_j}' \gets \sin{(\alpha-\theta)}/\sin{\alpha}$ \Comment{\small{refer Fig.~\ref{fig:obliquecompute}, $0\le \theta \le 2\pi$}}
		\State ${y_j}' \gets \sin{\theta}/\sin{\alpha}$
	\EndFor 
	\State \textbf{return} ${t_i}' = \{({x_j}',{y_j}')\}$ \Comment{\small{transformed trajectory}}%
\end{algorithmic}
\end{algorithm}

\section{Results}
\begin{figure*}[h]
\includegraphics[width=0.32\linewidth]{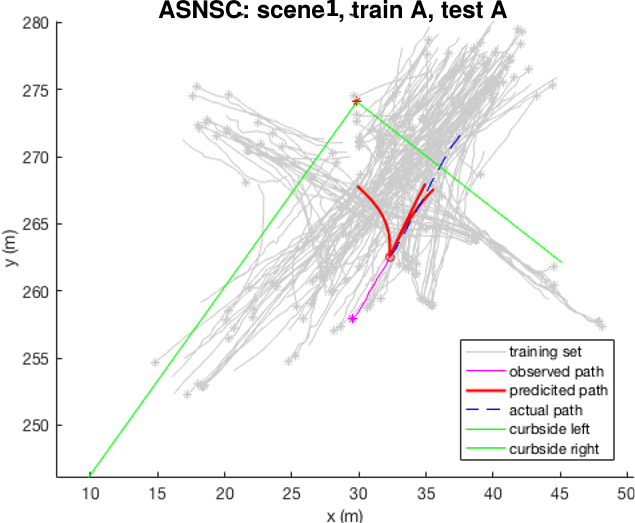}
\includegraphics[width=0.32\linewidth]{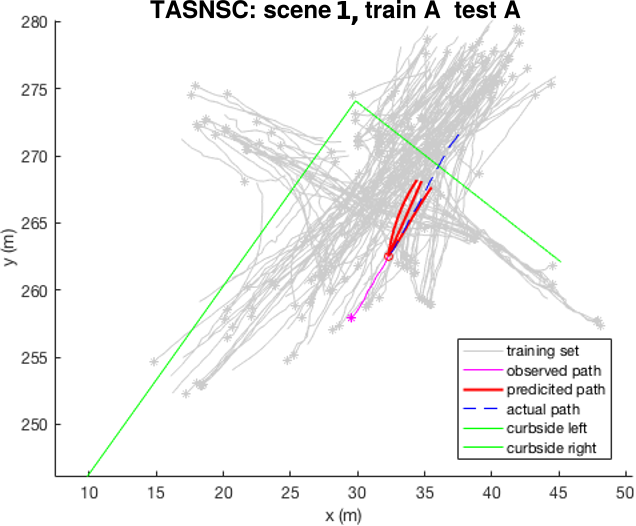}
\includegraphics[width=0.32\linewidth]{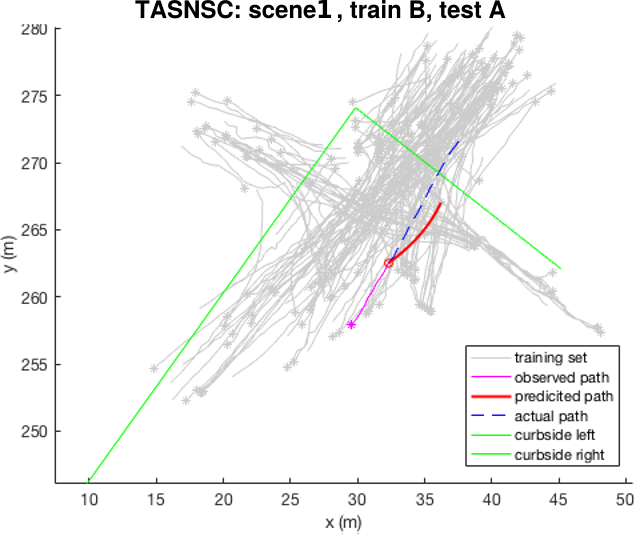}
\includegraphics[width=0.32\linewidth]{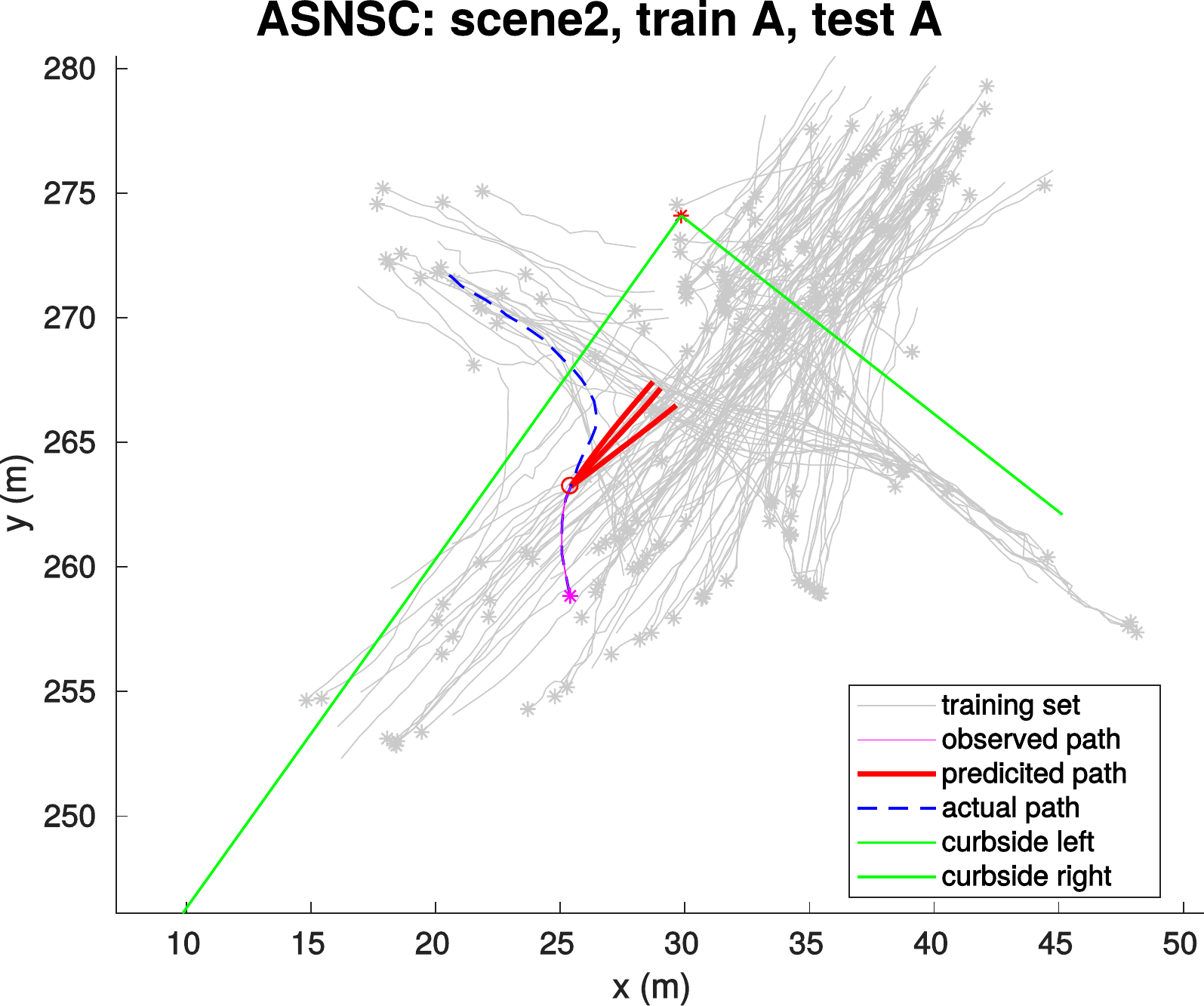}
\includegraphics[width=0.32\linewidth]{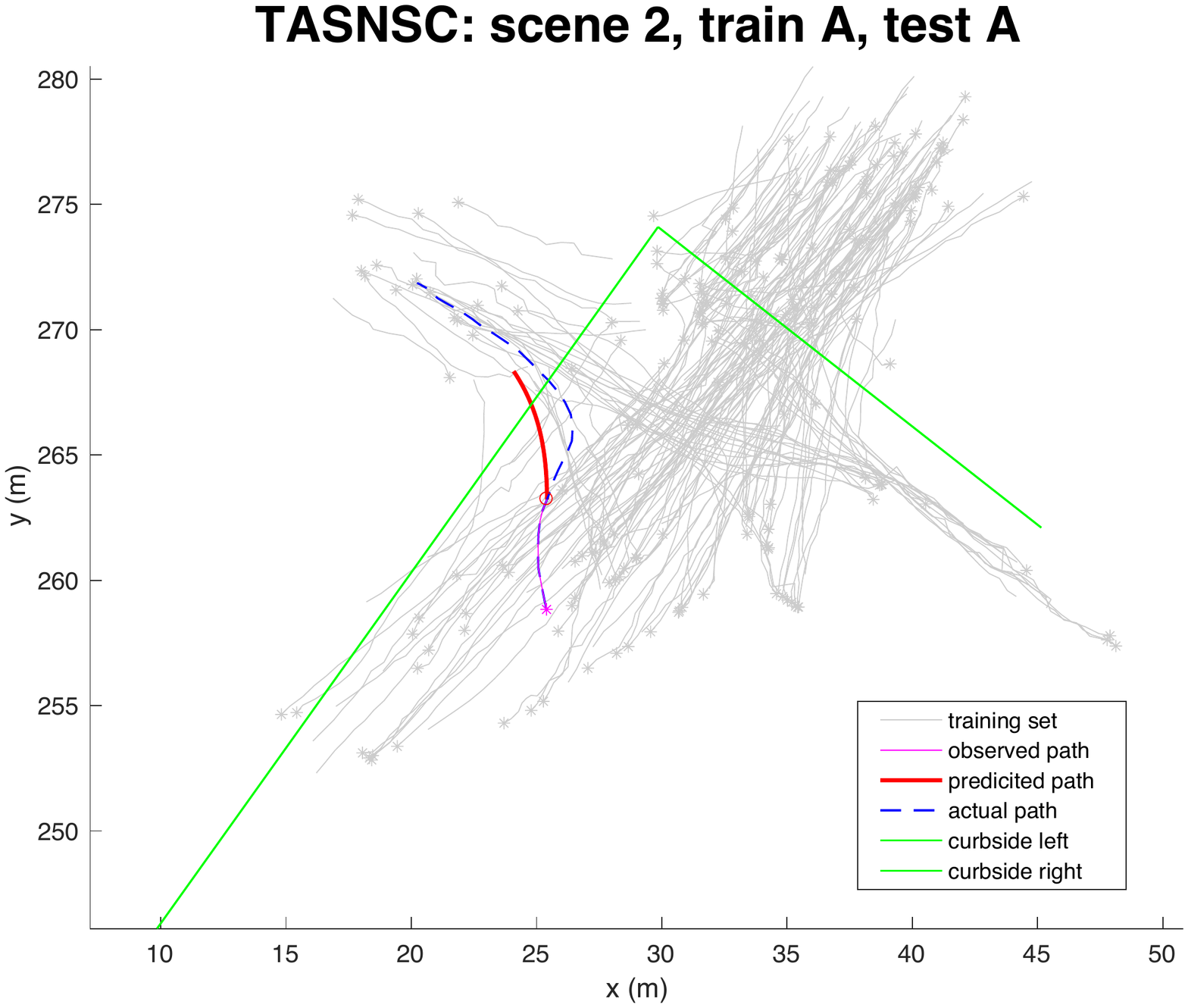}
\includegraphics[width=0.32\linewidth]{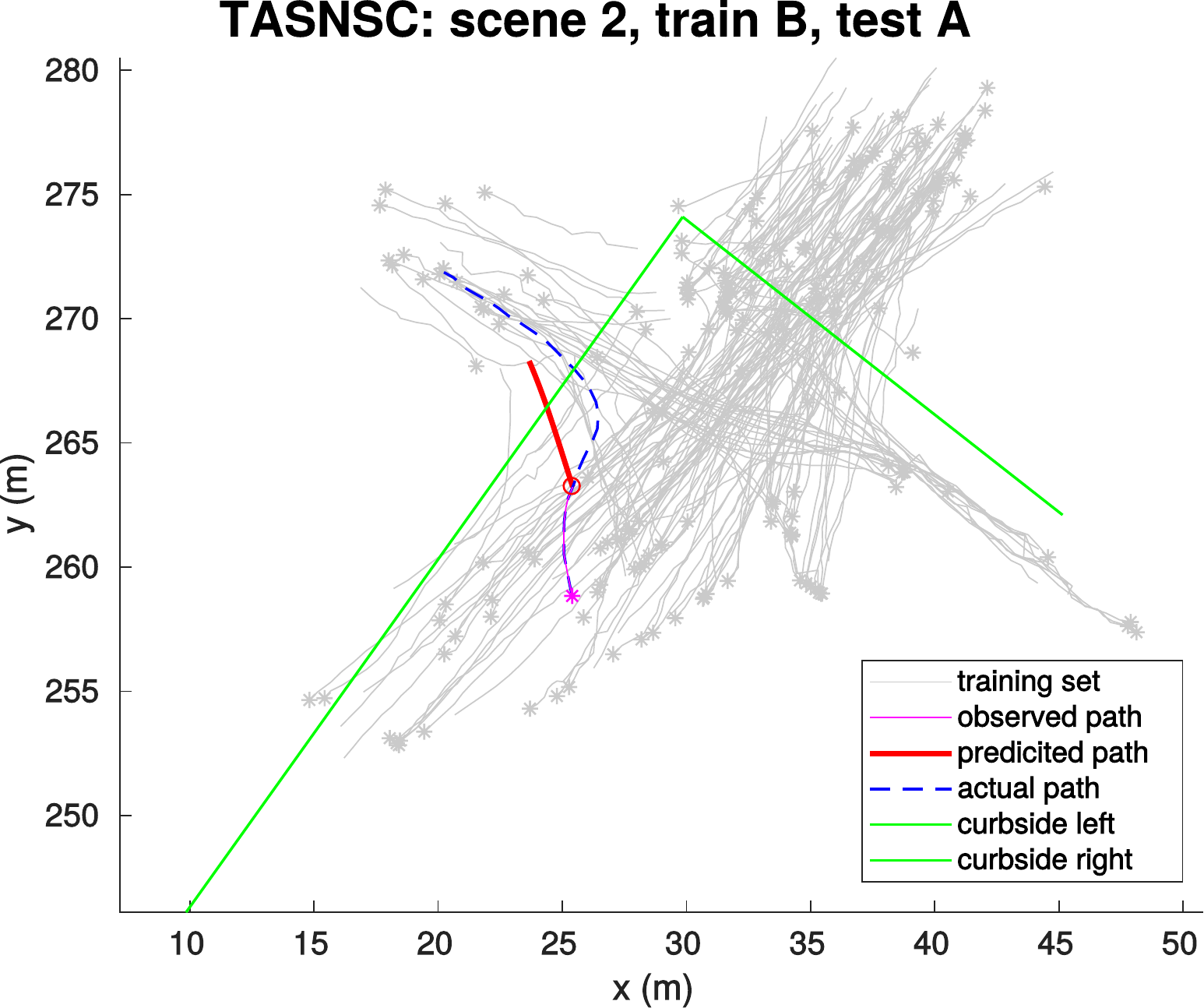}
\caption{
\label{fig:resultintersection1}
Prediction results in intersection \textbf{A} of ASNSC (left), TASNSC trained on the same intersection \textbf{A} (center) and TASNSC trained on a different intersection \textbf{B} (right). Ground truth is shown in dotted blue, observed trajectory in pink \& predicted trajectory in red. In the first scenario (first row), a pedestrian approaches the intersection corner, is faced with a choice between two crosswalks and decides to continue moving straight. In the second scenario (second row), another pedestrian approaches the intersection and is faced with the same choice as in the first, but in this case, decides to turn left.}
\end{figure*}

\begin{figure*}[h]
\centering
\includegraphics[width=0.32\linewidth]{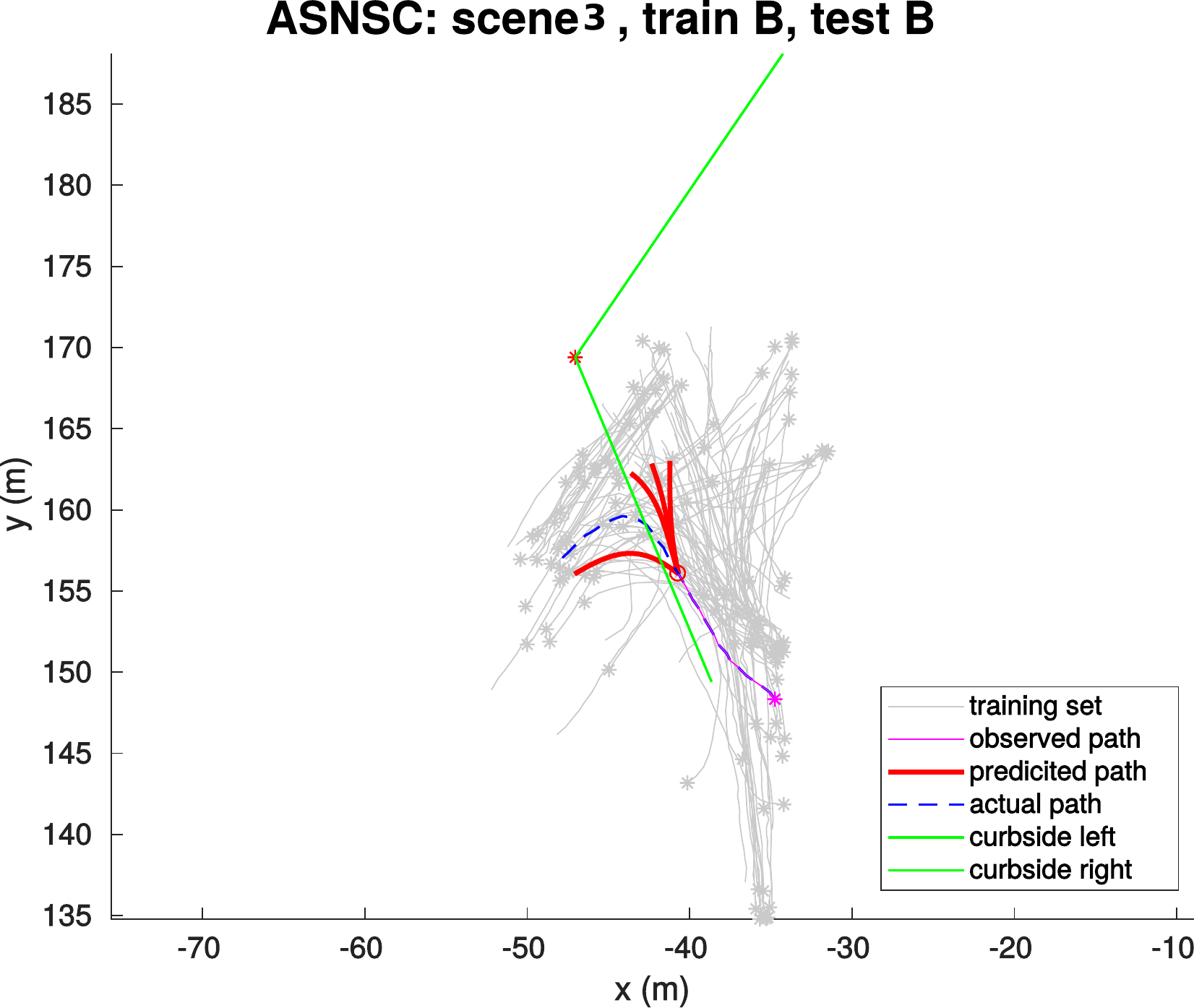}
\includegraphics[width=0.32\linewidth]{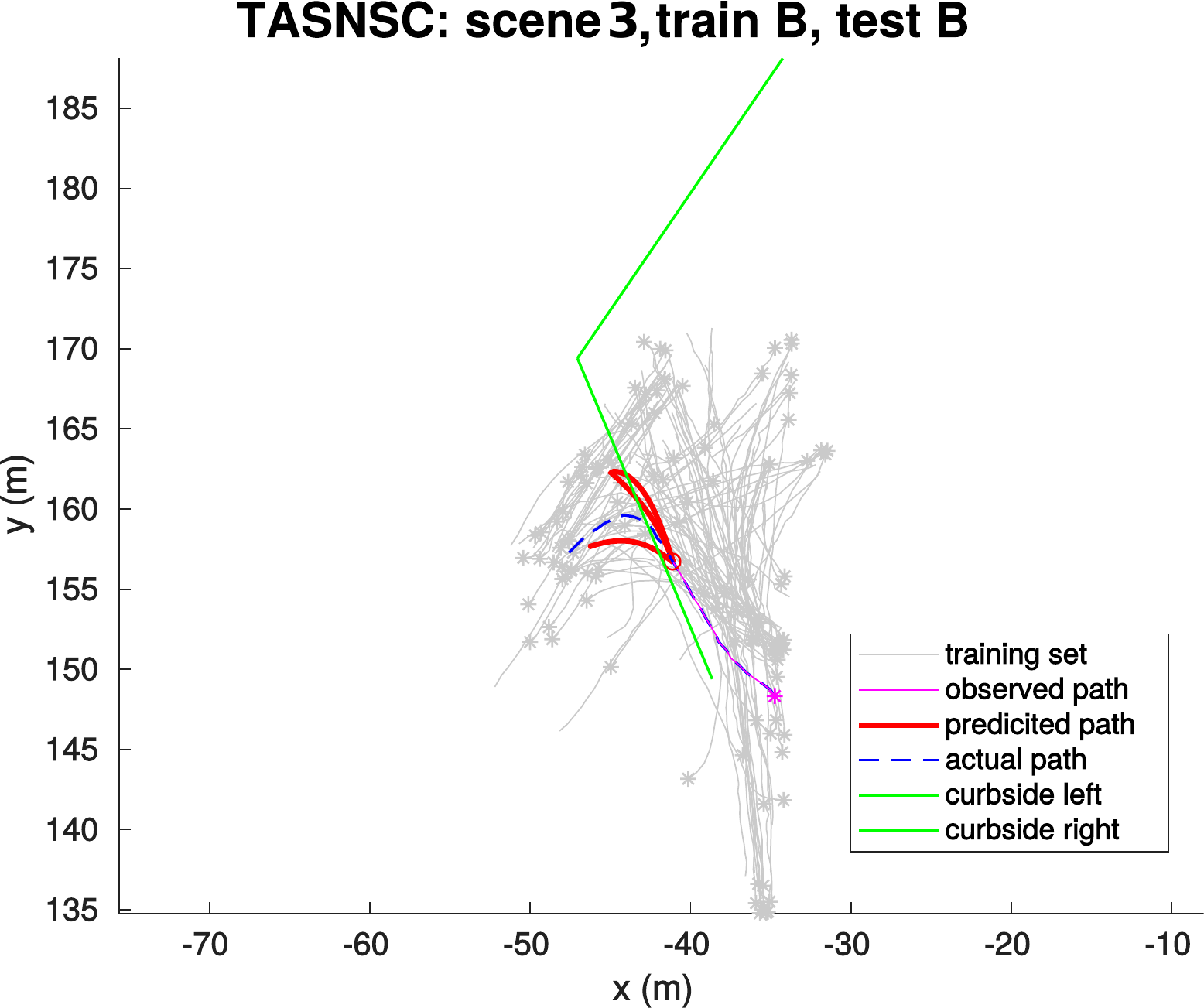}
\includegraphics[width=0.32\linewidth]{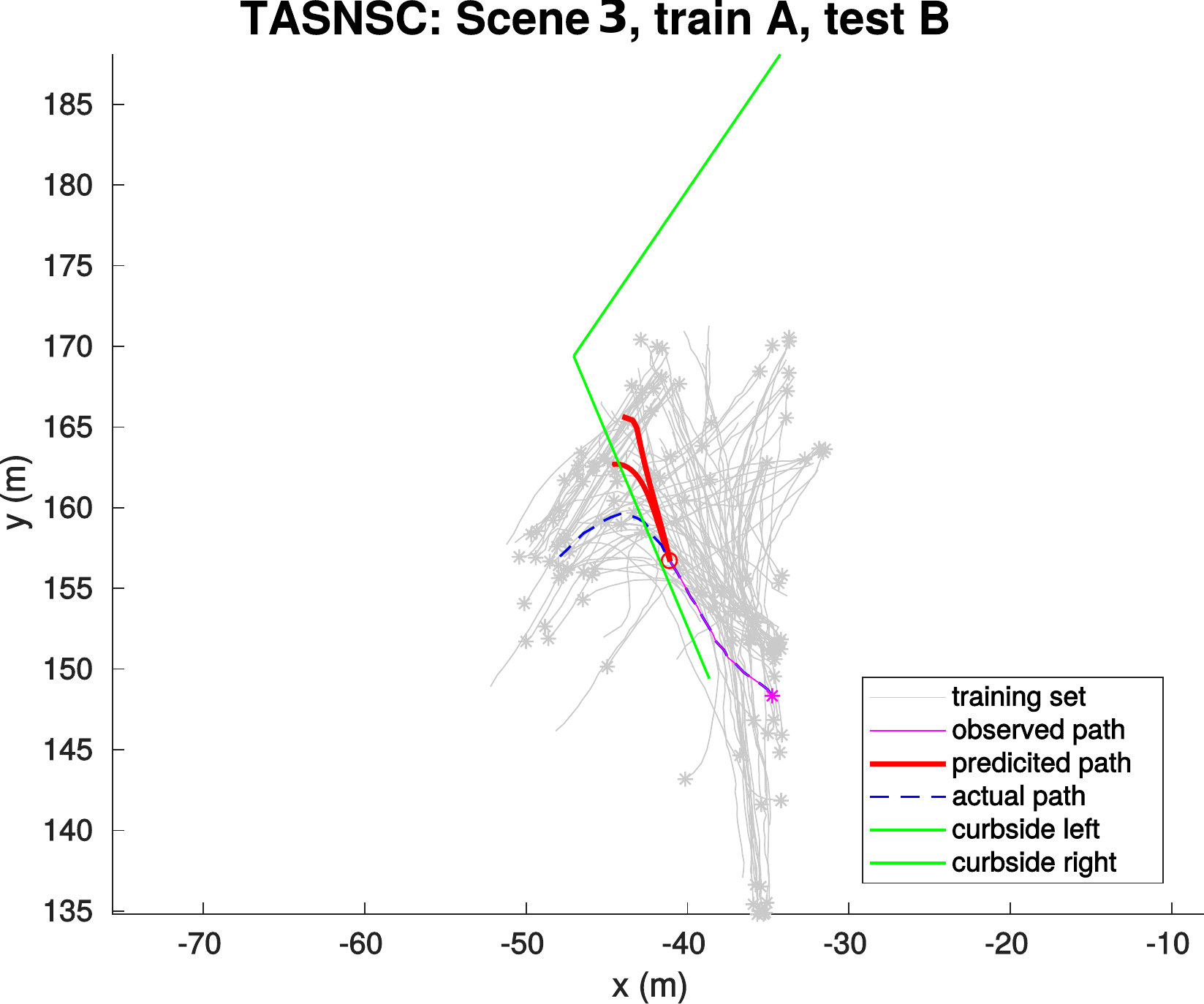}
\includegraphics[width=0.32\linewidth]{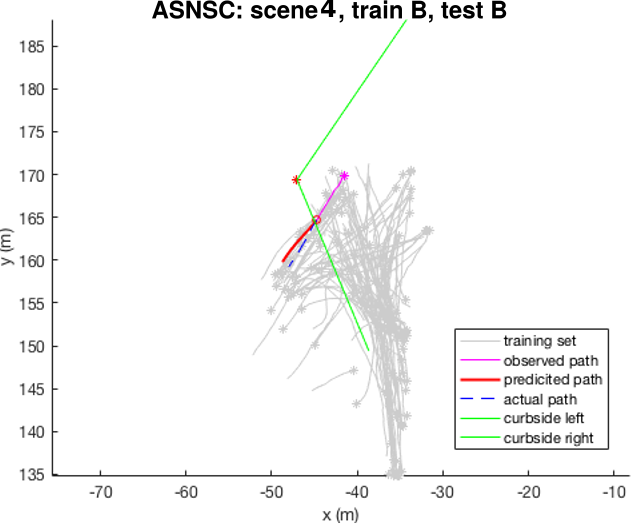}
\includegraphics[width=0.32\linewidth]{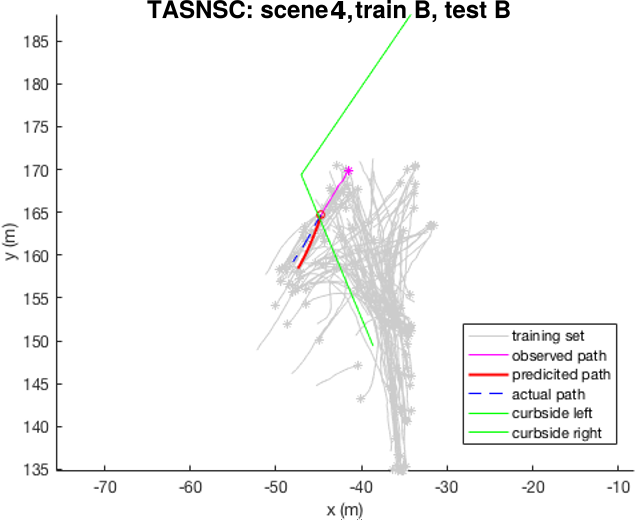}
\includegraphics[width=0.32\linewidth]{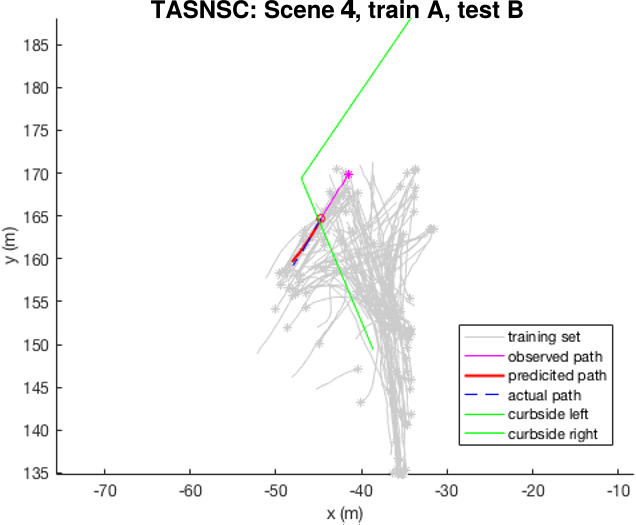}
\caption{
\label{fig:resultintersection2}
Prediction results in intersection \textbf{B} of ASNSC (left), TASNSC trained on the same intersection \textbf{B} (center) and TASNSC trained on a different intersection \textbf{A} (right). Again, ground truth is shown in dotted blue, observed trajectory in pink \& predicted trajectory in red. In the first scenario (first row), a pedestrian exits the curbside and starts walking along the left crosswalk. In the second scenario (second row), a pedestrian approaches the intersection corner, from inside of the sidewalk and continues walking straight to cross the street on the left.}
\end{figure*}

\subsection{Dataset description}
We test our algorithm  on real pedestrian data collected by a Polaris GEM vehicle equipped with three Logitech C920 cameras and a SICK LMS151 LIDARa~\cite{miller2017predictive, miller2016dynamic}. A prior occupancy grip map of the environment, created using the on-board LIDARs, is used to extract curbside boundaries. However, as long as the intersection corner is not crowded by obstructions such as trees, it is possible to detect the curbside online as the vehicle approached the intersection. Real pedestrian trajectories are collected in two different intersections (see Fig.~\ref{fig:intersections}). The dataset collected in intersection \textbf{A}, with nearly orthogonal curbsides, consists of 186 training and 32 test trajectories while that collected in intersection \textbf{B}, with skewed curbsides, consists of 114 training and 22 test trajectories. An observation history of 2.5 seconds prior to the pedestrian entering the intersection is used to predict 5 seconds ahead in time.

\begin{figure}[h]
\includegraphics[width=0.47\linewidth]{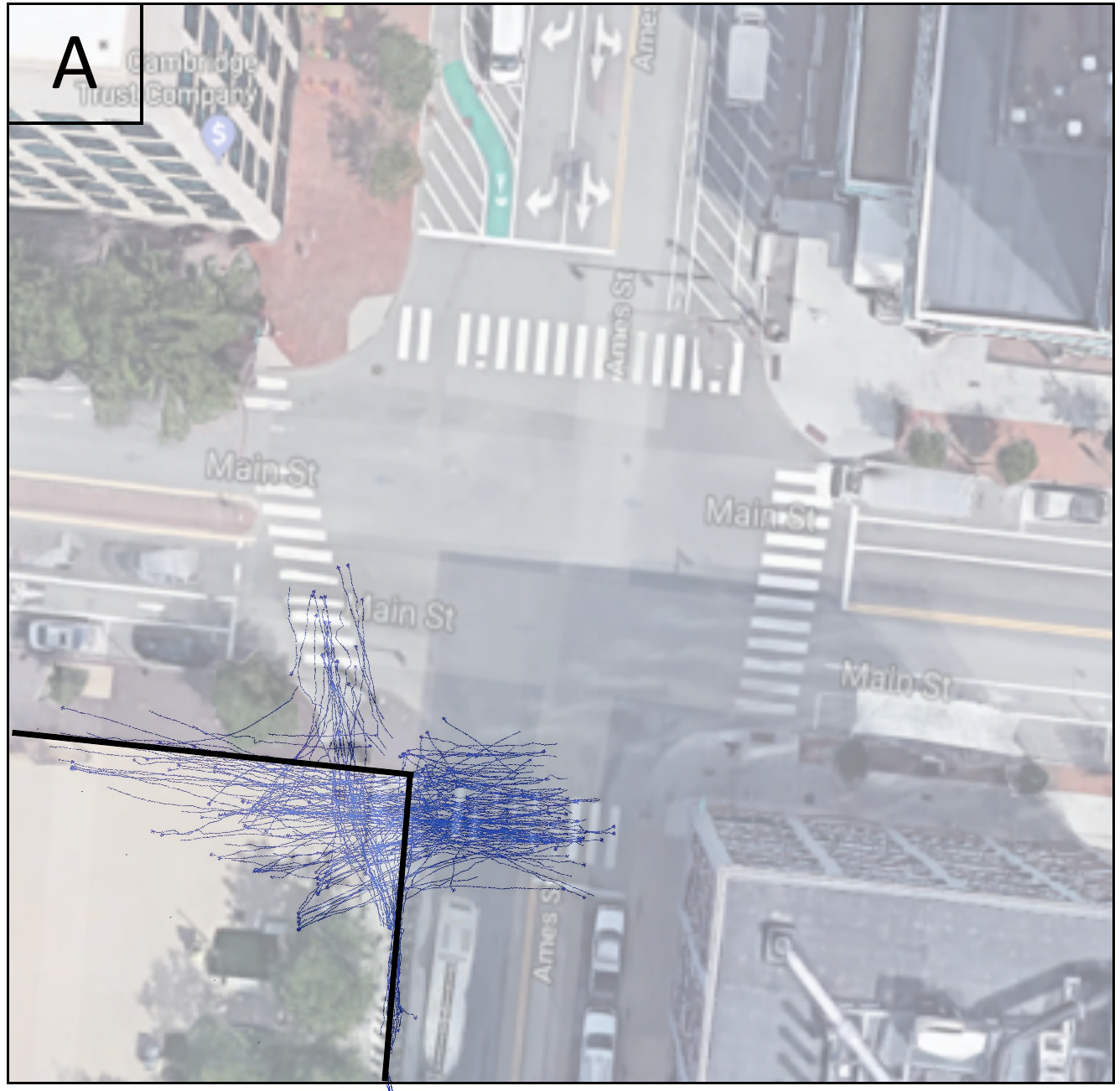}
~
\includegraphics[width=0.47\linewidth]{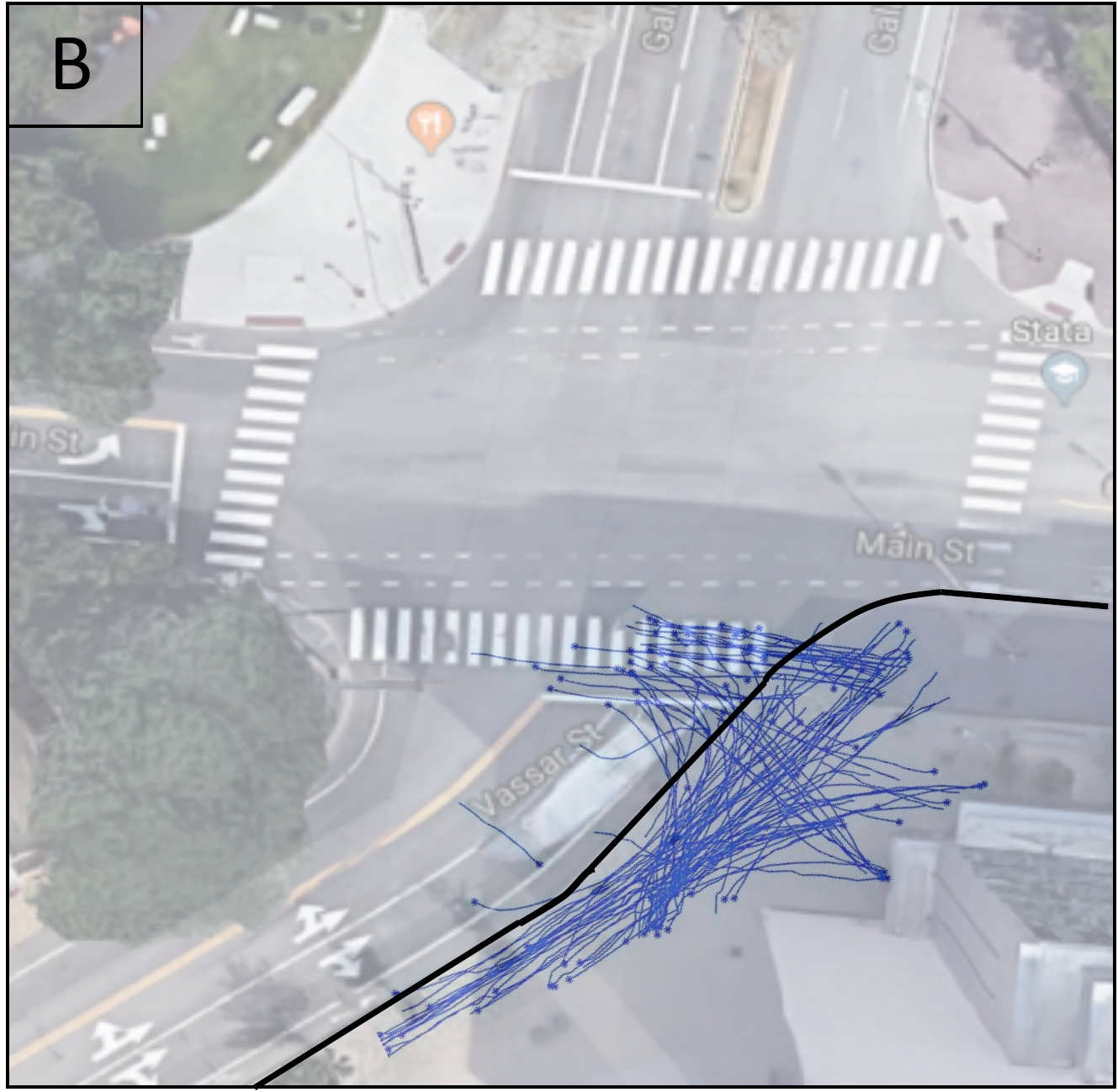}
\caption{An overhead snapshot of intersection \textbf{A} with orthogonal curbsides (left) and intersection \textbf{B} with skewed curbsides (right). The training dataset, shown in blue, consists of pedestrian trajectories collected using a 3D LIDAR and camera on-board a Polaris GEM vehicle parked at the intersection corners.
\label{fig:intersections}}
\end{figure}	

\subsection{Experiment details}
Two experiments were conducted for evaluating the prediction performance of TASNSC. In the first experiment, the training and test intersections are the same. While in the second experiment, the training and test intersections are different. The prediction performance of TASNSC in both these experiments is compared with ASNSC, which we use as a baseline. Fig.~\ref{fig:resultintersection1} and Fig.~\ref{fig:resultintersection2} show a qualitative comparison of prediction performance of TASNSC with ASNSC for both the experiments in intersections \textbf{A} and \textbf{B} respectively. As is clear from the trajectory prediction plots, TASNSC improves prediction performance over ASNSC in all scenarios when trained and tested on the same intersection. Furthermore, TASNSC shows comparable prediction performance with the baseline when trained and tested in different intersections.

\begin{figure}[h]
\floatbox[{\capbeside\thisfloatsetup{capbesideposition={left,top},capbesidewidth=4cm}}]{figure}[\FBwidth]
{\caption{An illustration to show that \emph{correct} predictions are defined as those that are within an angular deviation of 40 degrees from the ground truth in blue.}\label{fig:metric}}
{\includegraphics[clip=true,width=0.4\textwidth]{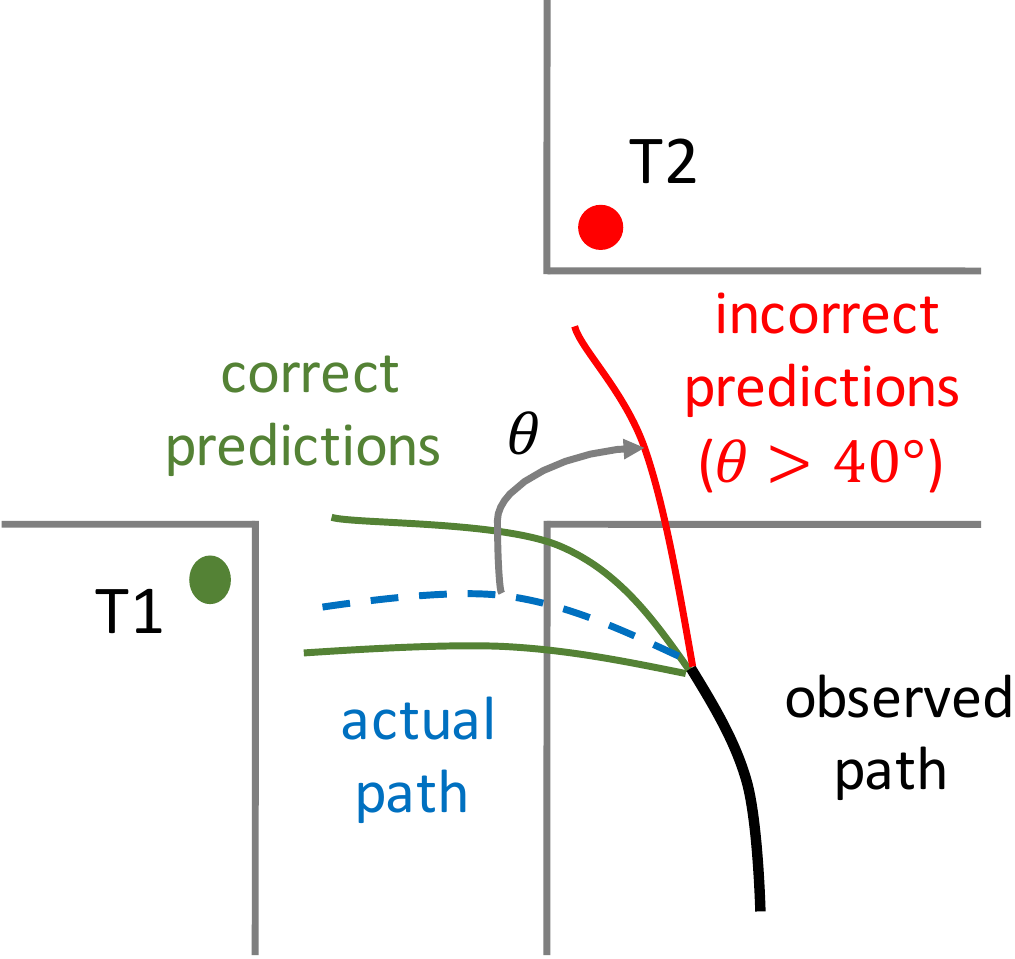}}
\end{figure}

Table~\ref{table:results} provides a quantitative comparison of TASNSC with ASNSC using two different metrics. The first metric, \emph{classification accuracy} represents the percentage of \emph{correct} predictions (see Fig.~\ref{fig:metric}) weighted by their likelihood of prediction. Mathematically, if a set of $n$ trajectories is predicted as $\{\mathbf{t}_1,\hdots,\mathbf{t}_n\}$, with their likelihood of prediction given by $\{l_1,\hdots,l_n\}$, and the \emph{correct} predictions are identified as $\{\mathbf{t}_i\} \ \forall \ i \ \in \mathbf{C} \subset \{1,\hdots,n\}$, the \emph{classification accuracy} is given by:
\begin{equation}
\text{Classification accuracy \%} = \frac{\sum_{i \in \mathbf{C}} l_i}{\sum_{k=1}^{n} l_k} \times 100 \% .
\end{equation}
The second metric, \emph{Modified Hausdorff Distance (MHD)}~\cite{dubuisson1994modified} is used to compare predicted trajectories with ground truth. As is clear from the comparison in Table~\ref{table:results}, TASNSC significantly outperforms ASNSC in \emph{classification accuracy}, while \emph{MHD} of TASNSC is either similar to or better than ASNSC when trained and tested on the same intersection. TASNSC also performs well in the case of different training and test intersections. In those experiments, adding pedestrian traffic light (shown as 'tr' in the table) as an additional context feature in the GP based transition models~\cite{ng2017intent}, boosts prediction performance (at the cost of computation time, which is a limitation of the use of the Gaussian Process for Machine Learning (GPML) package in \texttt{MATLAB} for learning hyperparameters in this case as opposed to manual tuning in the others). Furthermore, the best prediction performance, in terms of both \emph{MHD} and \emph{classification accuracy} is achieved by TASNSC when trained and tested in intersection \textbf{A}. This makes sense as the data collected in \textbf{A} is richer in terms of the number of trajectories and variety in maneuvers/behaviors, which leads to better prediction performance, in general, when trained in \textbf{A}.

\begin{table}[h]
	\small
	\caption{Quantitative performance comparison of TASNSC with ASNSC
	\label{table:results}}
	\begin{tabular}{lllllll}
	\toprule
	Algorithm & Classification & MHD & Time & Train & Test & tr\\
	 & Accuracy (\%) & (m)& (sec)& ~In  &~In  & \\ 
	\midrule
	ASNSC & \qquad 84.39 & 2.267 & 0.0625 &\quad \textbf{A} & \textbf{A} & N \\ 
	TASNSC & \qquad \color{blue}\textbf{90.47}&\color{blue}\textbf{2.031} & 0.0636 &\quad \textbf{A} & \textbf{A} & N\\
	TASNSC & \qquad 79.43 & 2.557& 0.0581 &\quad \textbf{B} & \textbf{A} & N\\
	TASNSC & \qquad 81.73  &2.284& 0.8643 &\quad \textbf{B} & \textbf{A} & Y\\
	\hline
	ASNSC & \qquad  76.94 & 2.506& 0.0352 &\quad \textbf{B} & \textbf{B} & N\\ 
	TASNSC & \qquad 82.79  &2.637& 0.0357 &\quad \textbf{B} & \textbf{B} & N\\
	TASNSC & \qquad  75.92 & 2.95& 0.0387 &\quad \textbf{A} & \textbf{B} & N\\
	TASNSC & \qquad 79.51 & 2.859 & 0.8938 &\quad \textbf{A} & \textbf{B} & Y\\
	\bottomrule
	\end{tabular}
\end{table}
\section{Conclusion}
The presented approach, TASNSC, is a general, accurate pedestrian trajectory prediction model for urban intersections. This is achieved by applying the ASNSC framework for learning motion primitives and subsequently, modeling the transition between these learned primitives from the transformed trajectories in the curbside coordinate frame. The motion primitives and their transition, thus learned, not only encode situational context in the form of distance to curbside, but are also agnostic to the specific training intersection geometry. Such motion primitives, can therefore, be used for prediction in new, unseen intersections with different curbside geometries by transforming the observed pedestrian trajectory into the curbside coordinate frame of the test intersection. We test our algorithm on two different intersections, one with almost orthogonal curbsides and the other with skewed curbsides. TASNSC shows 7.2\% improvement in \emph{classification accuracy} over ASNSC when trained and tested on the same intersection. A comparable prediction performance, with the baseline, is achieved when trained and tested on different intersections. Addition of traffic light as an additional context feature in the GP based transition models helps boost prediction performance in these experiments.

Our approach is limited by the need for a prior on curbside geometry. While one might argue that curbsides can be detected on-line as the vehicle approaches an intersection of interest, observability can be an issue because of occlusions and/or a limited FOV of on-board perception sensors. Therefore, there is a need to explore the incorporation of uncertainty in curbside geometry in the prediction model and analyze the robustness of TASNSC to it. Furthermore, interaction among pedestrians is not considered in the presented TASNSC framework and will be part of future work.
	
\section*{Acknowledgment}
Special thanks to Anthony Colangeli, Justin Miller, and Michael Everett for their tremendous help in collecting and annotating data. This project is funded by a research grant from the Ford Motor Company.
\bibliographystyle{IEEEtran}
\bibliography{ref}

\end{document}